\begin{document}

\title{Improving Heuristic-based Process Discovery Methods by Detecting Optimal Dependency Graphs}

\address{ Iran University of Science and Technology, Tehran 16844, Iran}

\author{Maryam Tavakoli-Zanian,  Mohammad Reza Gholamiani\\
School of Industrial Engineering \\
 Iran University of Science and Technology \\Tehran 16844 Iran\\
maryam.tavakolii@gmail.com, Gholamian@iust.ac.ir

\and S. Alireza Hashemi Golpayegani\\
Computer Engineering and IT Department\\
 Amirkabir University of Technology \\
Tehran, Iran } 

\maketitle

\runninghead{M. Tavakoli-Zaniani et al.}{Improving Heuristic-based Process Discovery Methods by Detecting ...}

\begin{abstract}
Heuristic-based methods are among the most popular methods in the process discovery area. This category of methods is composed of two main steps: 1) discovering a dependency graph 2) determining the split/join patterns of the dependency graph. The current dependency graph discovery techniques of heuristic-based methods select the initial set of graph arcs according to dependency measures and then modify the set regarding some criteria. This can lead to selecting the non-optimal set of arcs. Also, the modifications can result in modeling rare behaviors and, consequently, low precision and non-simple process models. Thus, constructing dependency graphs through selecting the optimal set of arcs has a high potential for improving graphs quality. Hence, this paper proposes a new integer linear programming model that determines the optimal set of graph arcs regarding dependency measures. Simultaneously, the proposed method can eliminate some other issues that the existing methods cannot handle completely; i.e., even in the presence of loops, it guarantees that all tasks are on a path from the initial to the final tasks. This approach also allows utilizing domain knowledge by introducing appropriate constraints, which can be a practical advantage in real-world problems. To assess the results, we modified two existing methods of evaluating process models to make them capable of measuring the quality of dependency graphs. According to assessments, the outputs of the proposed method are superior to the outputs of the most prominent dependency graph discovery methods in terms of fitness, precision, and especially simplicity. 
\end{abstract}

\begin{keywords}
Dependency graphs, Heuristic-based process discovery, Integer linear programming, Process mining
\end{keywords}

\section{Introduction}\label{sec1}


Heuristic-based process discovery is one of the most popular approaches to process discovery because of benefiting numerous advantages, such as the ability to identify less-structured process models and handle noise and incompleteness \cite{Ref1,Ref2}. The first algorithm in this category is Heuristics Miner\cite{Ref3}, and numerous extensions have been introduced for this algorithm so far \cite{Ref4, Ref5, Ref6, Ref7}. All algorithms in this category (also called “heuristic” mining methods) are composed of two major steps. First, the dependency graph is created, including basic causal relations of tasks (i.e., prerequisite and post-requisite relations). In the second step, specific patterns in the frequency of dependencies are employed to decide on split/join types.

In the dependency graph discovery step of heuristic mining methods, the initial set of dependency graph arcs is selected according to some user-defined thresholds for minimum dependency measures. Then, some arcs are added to the initial set to modify the dependency graph with respect to some criteria, such as making it connected. This can lead to selecting a non-optimal set of arcs. Moreover, the modifications can increase the number of graph arcs, which may result in modeling infrequent behaviors (i.e., making the results less precise) and increasing the size of the process model (i.e., making the results less simple). 

Moreover, despite applying various solutions, when the dependency graphs extracted by existing heuristic-based miners contain loops, still some tasks are likely to not be on a path from the initial task to the final task. This can lead to non-sound process models.

The current heuristic mining methods also face severe limitations on the types of knowledge they can employ. While, usually, there is some precious domain knowledge among various stakeholders involved in the process, and using it can improve the outputs \cite{Ref10}.

Mathematical programming appears to be an appropriate approach to dependency graph discovery. Because, through defining an appropriate objective function, the optimal graph arcs can be identified, while, using appropriate constraints, the other problems mentioned above can be appropriately handled in the procedure of dependency graph discovery. Especially in the cases of linear programming (LP) and integer linear programming (ILP), there are efficient and powerful methods for achieving global optimal solutions. Therefore, this study, for the first time, introduces an ILP model for dependency graph discovery which 1) selects the optimal set of dependency graph arcs concerning dependency measures. 2) can ensure that all tasks of the resultant graph are on a path from the initial task to the final task (even when the output graph contains loops. 3) applying many types of domain knowledge and user-desired flexibility is easily possible by simply defining some appropriate constraints. This can be an advantage with high applicability in real-world problems \cite{Ref10}.
 
Moreover, to the best of our knowledge, there is no method in the literature for evaluating the quality of dependency graphs. Thus, in this research, two existing methods in assessing the quality of process models are modified to measure the quality of dependency graphs. The first proposed measure assesses the degree to which the sequences of events observed in the event log can be allowed according to the dependency graph pre-requisite/post-requisite relations. The second measure assesses the extent to which the sequences of events not observed in the event log are not allowed according to the dependency graph pre-requisite/post-requisite relations.
 
The rest of this paper consists of the following sections. Section~\ref{sec2}  provides a literature review and presents preliminary concepts. In Section~\ref{sec3}, the problem structure is addressed, and the proposed objective function and constraints are introduced. Section~\ref{sec4} presents the measures applied to evaluate the quality of dependency graphs. Empirical results of assessing the proposed model are then presented in Section~\ref{sec5}. The paper is summarized with the research conclusion and future studies in Section~\ref{sec6}.

\section{Literature Review}\label{sec2}
\subsection{Review of Literature and Related Studies}\label{sec2.1}
Dependency graph discovery is an essential step in heuristic-based process discovery algorithms. The first method in this category is Heuristics Miner\cite{Ref3}, which is among the most used and customized process mining algorithms \cite{Ref1}. In this algorithm, the dependency graph is constructed according to some minimum thresholds for measures of dependency between tasks. The dependency measures are calculated according to the count of tasks and direct succession relations (more details on direct succession relations can be found in Section~\ref{sec2.3}). Then, to consider long-distance dependencies, some arcs are added to the dependency graph. 

By adding some extra arcs, Heuristics Miner also ensures that in the extracted dependency graph, all tasks except for the final task have at least one output arc, and all tasks except for the initial task have at least one input arc. Using this solution, it tried to make all tasks on a path from the initial task to the final task; however, the utilized solution cannot guarantee this if the extracted dependency graph contains loops. In addition, this procedure of selecting the initial set of arcs and then adding some extra arcs to the set is likely to lead to a non-optimal set of graph arcs. Moreover, increasing the number of the arcs can result in modeling rare behaviors and increasing the dependency graph size (i.e., this can decrease the precision and simplicity of the results).

In 2012, a version of the Heuristics Miner was developed that was able to deal with the streaming event data \cite{Ref4}. This algorithm extracts the dependency graph similarly to Heuristics Miner; however, it calculates the count of direct succession and tasks differently. Another version named Heuristics Miner ++  was developed by \cite{Ref5} in 2015. This algorithm considers tasks along with their time intervals. It uses a new way of counting direct succession relations and introduced new dependency measures. 

A modified version of Heuristics Miner named “Flexible Heuristics Miner” was proposed in \cite{Ref6}, whose output model is a causal net. This algorithm constructs the dependency graph the same as the Heuristics Miner; however, it utilizes a different solution for considering long-distance dependencies. In 2017, a version of the Heuristics Miner algorithm, namely Fodina, was developed focusing on robustness against noise and flexibility \cite{Ref7}. This method introduced a modified version of dependency measures used by Heuristics Miner. Fodina can discover duplicate tasks and provides some flexible configuration options to guide the dependency graph discovery procedure with respect to the settings made by the end-user. This method proposed a new method to guarantee dependency graph connectivity. Nonetheless, the method is practically proven to be still likely to produce unconnected graphs.

To handle the drawbacks of the dependency graph discovery step of the methods mentioned above, in this study, using mathematical programming is suggested. The use of mathematical programming in process discovery is not a new subject, and numerous studies have employed this approach for process discovery \cite{Ref9,Ref10,Ref11,Ref12,Ref13,Ref14,Ref15,Ref16}. Nevertheless, \cite{Ref10} is the only research we have found that uses mathematical programming for dependency graph discovery. It proposed a process discovery method called Proximity Miner that focused on involving experts' knowledge in discovering an actionable process model. In this regard, it introduced an ILP model in dependency graph discovery that utilizes the knowledge of domain experts. After introducing a measure called proximity score, the mentioned study has defined the objective function of its proposed ILP model based on proximity score. The ILP model also utilizes three types of constraints: 1) constraints related to applying domain knowledge 2) constraints which guarantee that the initial task has no input arc and the final task has no output arc 3) constraints which guarantee that all tasks except for the initial task has at least one input arc, and all tasks except for the final task have at least one output arc. The third type of constraints uses an approach similar to Heuristics Miner to avoid extracting dependency graphs containing at least one task, not on a path from the initial task to the final task. However, as mentioned before, theoretically, when the output dependency graph contains loops, this solution cannot completely guarantee the absence of the problem.

In addition, the focus of Proximity Miner is defined on involving domain knowledge in the mining procedure. Hence, in the case of using no prior expert knowledge, the ILP model is prone to create complex dependency graphs since it tends to add an arc to the dependency graph for every direct succession which is seen between tasks in the event log. In other words, in the case of utilizing no prior knowledge, the model does not consider that observing a direct succession in the event log is not necessarily an indicator of a dependency and can be due to concurrent tasks or noises. It also does not provide a solution for distinguishing length-two loops from the other structures with similar direct succession footprints.

It also should be mentioned that there are studies in the literature that are dedicated to filtering directly-follows graphs \cite{Ref17,Ref18,Ref19} or employing mathematical programming to make anomaly-free and filtered directly-follows graphs \cite {Ref20} (which is also called log automatons in some of these studies). However, although directly-follow graphs are similar to dependency graphs in some aspects, they have fundamental and significant differences. In dependency graphs, the arcs are an indicator of causal relations between tasks, and there is no arc between tasks that are supposed to be concurrent; whereas, in directly-follows graphs, the arcs are an indicator of direct successions that occurred in the event log and there are arcs between tasks that are considered to be concurrent. Thus, the outputs of these studies are not suitable for applying the second step of heuristic-based process discovery methods, and as a consequence, they cannot be used as a step of the heuristic-based methods.

Our study is also not related to the methods that discover process models containing only XOR splits/joins (such as \cite{Ref9,Ref14,Ref15}) and the commercial software that extract process maps (such as Disco). Because the second step of the heuristic mining methods is not appliable to the outputs of these methods, and as a consequence, these methods also cannot be employed as the dependency graph discovery step of the heuristic-based methods.

\subsection{Definitions}\label{sec2.2}

\begin{definition}[Log]
An event log $L$  is a multiset of traces, each mapped onto one case. Assuming $T_L$  as the set of all traces that are present in $L$, a trace $t \in T_L$  is a finite sequence of events occurring for a process instance, which is defined as follows:

\textit (Event): $E$ is a set of events and  $E=A  \times Y  \times T$ where $A$ is a set of tasks, $T$ is a set of timestamps, and $Y$ is a finite set of attributes (case ID, event type, duration, and so on). $A_L$  is a finite set of all tasks present in $L$, and $\lvert A_L \rvert$ is the size of the set $A_L$ .
\end{definition}

\begin{definition}[Dependency Graph]
A dependency graph $DG$ is a multiset like $DG=(A_{DG},D)$  , in which $A_{DG}$  is a limited set of tasks present in $DG$ and $D=\{(a_1,a_2) \in A_{DG} \times A_{DG}\}$  is the set of directed arcs in the graph representing the dependencies between tasks. The dependency graph should have an initial task ($a_s \in A_{DG}$ ) and a final task ( $a_e \in A_{DG}$), for them $  \not \exists a_i \in A_{DG}:(a_i,a_s ) \in D$ and  $ \not \exists a_j \in A_{DG}:(a_e,a_j ) \in D$. All the tasks in$ DG$ should be placed on a path from $a_s$  to $a_e$. 

\end{definition}
\begin{definition}[short/length-one loop]
Assuming the dependency graph $DG=(A_{DG},D)$  , task $a_i \in A_{DG}$  is involved in a short loop if $(a_i,a_i) \in D$.

\end{definition}

\begin{definition}[Length-Two Loop]
Assuming the dependency graph $DG=(A_{DG},D)$  ,tasks $a_i, a_j \in a_{DG}$  are involved in a length-two loop if $(a_i,a_j) \in D \wedge (a_j,a_i) \in D$ .

\end{definition}

\subsection{Dependency Graph Discovery in Heuristic-based Process Mining Methods}\label{sec2.3}
This section addresses the main core of constructing a dependency graph in the heuristic-based process discovery methods. 

The first step is to extract the basic relations of tasks from the event log. Based on the resultant relations, the dependency graph is then created. Assuming L as an event log and tasks $a,b \in A_L$, the most important information of these algorithms extracted from the event log will be as follows:

\begin{itemize}
\item $ \mid a \mid$  is the number of times at which $a$ occurs in the event log (frequency of $a$).
\item $ \mid a>b \mid$ is the number of direct successions of $a$ by $b$, showing the number of times at which $a$ has directly (immediately) been followed by $b$ in the event log.
\item $ \mid a>>b \mid$ is the number of times in the event log at which repetition has occurred between $a$ and $b$ (i.e., $b$ occurred immediately after $a$, and then  immediately $a$ occurred again).
\end{itemize}

More details on achieving the values mentioned above can be found in \cite {Ref3}. After calculating these values, the dependency graph is created through dependency measures describing the basic causal relations (i.e., follows and precedes). Various measures have been proposed in the literature to assess the dependency of tasks/loops; however, they are mainly extracted from the aforementioned relations. When a dependency measure for two tasks exceeds the threshold set by the user, an arc will be added between those two tasks in the dependency graph. Also, a measure of length-two loop dependency is employed to extract loops of this type. When this measure exceeds the threshold set by the user, two arcs are added to the graph (one from $a$ to $b$ and one from $b$ to $a$). In the next step, the graph is modified based on some criteria, such as making the graph connected, considering long-distance dependencies, or some other flexible options.

\section{Proposed ILP Model}\label{sec3}
Constructing a dependency graph is an essential and basic step in heuristic process discovery algorithms. As mentioned in the previous section, these methods, according to some minimum dependency thresholds, select an initial set of arcs and then, based on some criteria, and user-defined flexible configuration, modify that set. However, decisions on the addition or deletion of each arc are made locally and regardless of any potential effects on the entire graph. This can lead to a non-optimal selection of arcs. It can also be practically observed that these methods may be unsuccessful in extracting a simple dependency graph, which can result in a non-simple (complex or spaghetti-like) process model. In addition, the solutions employed in these methods to ensure that in the output graph, all tasks are on a path from the initial task to the final task can lose efficiency in the presence of loops.  

On the other hand, in the current heuristic-based discovery methods, there are limitations on utilizing many types of domain knowledge and offering flexibility in the mining procedure, while these options are highly applicable in real-world problems. 

Given the fact that mathematical programming can be very efficient in resolving the mentioned flaws, an approach is proposed in this section to transform the problem of dependency graph discovery into an integer linear programming problem. Thus, in the rest of this section, we first explain the notations and model variables utilized in this paper in Tables~\ref{tab:1} and ~\ref{tab:2},  and then introduce the proposed objective function and the relevant constraints.

The proposed ILP model is totally different from the ILP model introduced by \cite{Ref10}. Our objective function is based on dependency measure, which is superior to the proximity score employed in the objective function of \cite{Ref10} in identifying concurrent tasks, length-two loops, and noises (as is mentioned in Section~\ref{sec2.1}). Also, except for the constraints defining the initial and final tasks, entirely different and more comprehensive constraints have been used in this study. For example, even in the presence of loops, our constraints can ensure that all tasks are on a path from the initial to the final tasks. Furthermore, they can control the arc number and minimum dependency thresholds for the arcs/loops of the output graph.

\begin{table} 
\begin{center}
\caption{The notations used in the rest of the paper (assuming an input event log $L$)}
\label{tab:1} 
\begin{tabular}{p{0.15\linewidth} | p{0.75\linewidth}}

 \hline\noalign{\smallskip}
 Notation & Description  \\
\noalign{\smallskip}\hline\noalign{\smallskip}
$a_{sL} \in A_L$		&	The common initial task of all traces in  $L$ (If there is not such a task, it can be added artificially).\\
$a_{eL} \in A_L$		&	The common final task of all traces in  $L$ (If there is not such a task, it can be added artificially).\\
$d_{i,j}$		&	The measure of dependency between tasks $i \in A_L$ and $j \in A_L$.\\
$s_{i}$		&	The dependency measure for short loop of task $i \in A_L$.\\
$l_{i,j}$		&	The dependency measure for loop of length two between tasks $i \in A_L$ and $j \in A_L$.\\
$M$ 			&	A sufficiently large number.\\
$DepThresh$	&	A user-defined threshold for the minimum dependency measure of arcs that can be present in extracted dependency graphs.\\
$SLoopThresh$	&	A user-defined threshold for the minimum dependency measure of short loops that can be present in the extracted dependency graph.\\
$LoopThresh$	&	A user-defined threshold for the minimum dependency measure of length-two loops that can be present in the extracted dependency graph.\\
$MaxArcsRatio$	&	A user-defined parameter for determining the maximum number of arcs in the dependency graph.\\
$MaxOutputs$	&	A user-defined parameter for determining the maximum number of output arcs for each task in the dependency graph.\\
$MaxInputs$		&	A user-defined parameter for determining the maximum number of input arcs for each task in the dependency graph.\\
\noalign{\smallskip}\hline
\end{tabular}
\end{center}
\end{table}

\begin{table}
\caption{The variables used in the proposed model}
\label{tab:2} 
 \begin{tabular}  {p{0.10\linewidth}  | p{0.80\linewidth}} 
  \hline\noalign{\smallskip}
 Notation & Description  \\
\noalign{\smallskip}\hline\noalign{\smallskip}
$E_{i,j}$	 &	A binary variable that is equal to one if task $i$ is a direct prerequisite for task $j$ (and, hence, there is an arc from $i$ to $j$ in the dependency graph).\\
$x_{i,j}$	&	A binary variable that determines the arcs present in a subgraph of the dependency graph. if $x_{i,j}$ is equal to one, that means that in the subgraph, there is an arc from task $i$ to task $j$. (this variable is used in the constrains pertaining to the trueness of condition 4).\\
$y_{i,j}$	&	A binary variable that determines the arcs present in another subgraph of the dependency graph. if $y_{i,j}$ is equal to one, that means that in the subgraph, there is an arc from task $i$ to task $j$. (this variable has been used in the constrains pertaining to the trueness of condition 5).\\
$R_{i,j}$	&	A binary variable indicates whether $i$ and $j$ are involved in a length-two loop.\\
$u_i$		&	An integer variable that is allocated to each task in order to create a subset of $G_1 =(A_{DG},D_1)$  without any loops ((this variable has been used in the constrains pertaining to the trueness of condition 4.2).\\
$q_i$	&	An integer variable that is allocated to each task in order to create a subset of  $G_2 =(A_{DG},D_2)$  without any loops (this variable has been used in the constrains pertaining to the trueness of condition 5.2).\\
$forced_{i,j}$	&	A binary variable that should be equal to one if both of the following conditions are met: 1) the dependency measure of task $i$ regarding task $j$ ($d_{i,j}$ ) is lower than the user-defined threshold, called $DepThresh$, 2) in order to avoid infeasibility, the proposed mathematical model has set $E_{i,j}$  to one.\\
$forcesl_i$	&	A binary variable that should be equal to one if both of the following conditions are met: 1) the short loop dependency measure of task $i$ ($s_i$ ) is lower than the user-defined threshold, called $SLoopThresh$, 2) to avoid infeasibility, the proposed mathematical model has set $E_{i,i}$  to one.\\
$forcel_{i,j}$	&	A binary variable that should be equal to one if both of the following conditions are met: 1) the length-two loop dependency measure of tasks $i$ and $j$ ($l_{i,j}$ ) is lower than the user-defined threshold, called $LoopThresh$, 2) to avoid infeasibility, the proposed mathematical model has set $R_{i,j}$  to one.\\
\noalign{\smallskip}\hline
  \end{tabular}
    \end{table} 
\smallskip

\subsection{Objective Function}\label{sec3.1}
The proposed objective function aims to maximize the summation of the dependency measures for the arcs/loops present in the extracted dependency graph and uses the following formulation:

\begin{align}
Max \hspace{0.5cm} \sum_{\forall i,j \in A_L:i \ne j}{E_{i,j} \times d_{i,j}}+ \alpha \sum_{i \in A_L}{E_{i,j} \times s_i}
+ \beta/ 2 \sum_{\forall i,j \in A_L: i \ne j}{R_{i,j} \times l_{i,j}}-punishment
\label{eq1}
\end{align}

Where $E_{i,j}$  is the binary decision variable pertaining to the presence of an arc in the dependency graph. If this variable is equal to one, there is an arc from task $i$ to task $j$ in the resultant dependency graph. However, if it is equal to zero, it means that such an arc does not exist in the graph. Obviously, if $E_{i,i}$  is equal to one, $i$ is involved in a short loop. Furthermore, $R_{i,j}$  is the decision variable pertaining to the existence of a length-two loop in the graph. If this variable is equal to one, there is an arc from task $i$ to task $j$ and vice versa from task $j$ to the task $i$. If this variable is equal to zero, these two arcs cannot exist simultaneously in the graph. Moreover, $d_{i,j}$  is the measure of dependency between tasks $j$ and $j$, while $s_i$  and $l_{i,j}$  are dependency measures for short and length-two loops, respectively. They are the scores given to each of the arcs, short and length-two loops in the graph. The formula proposed by \cite{Ref3,Ref6} is employed to determine these measures as follows:
\begin{align}
d_{i,j}=\frac{\mid i>j \mid - \mid j>i \mid}{\mid i>j \mid + \mid j>i \mid+1} \hspace{0.2cm} \forall i,j \in A_L : i \ne j
\label{eq2}
\end{align}
\begin{align}
s_i=\frac{\mid i>i \mid }{\mid i>i \mid +1} \hspace{0.5cm} \forall i \in A_L 
\label{eq3}
\end{align}
\begin{align}
l_{i,j}=\frac{\mid i>>j \mid + \mid j>>i \mid}{\mid i>>j \mid + \mid j>>i \mid+1} \hspace{0.2cm} \forall i,j \in A_L : i \ne j
\label{eq4}
\end{align}
$\alpha$  and $\beta$   are user-defined parameters used to enhance the flexibility of the mining procedure. The lower the $\alpha$  and $\beta$ , the less the probability of the existence of respectively short loops and length-two loops in the extracted dependency graph.
Punishment is a term included in the objective function to penalize non-compliance with the minimum thresholds set by the user for the dependency measures of arcs and short/length-two loops. It is computed according to the following formula:

\begin{align}
punishment=M \sum_{\forall i,j \in A_L:i \ne j}{forced_{i,j} \times (1-d_{i,j})} \nonumber \\
+M \sum_{\forall i,j \in A_L:i \ne j}{forcel_{i,j} \times (1-l_{i,j})}\nonumber \\
+M \sum_{\forall i \in A_L}{forcesl_{i} \times (1-s_{i})}
\label{eq5}
\end{align}

\subsection{Model Constraints}\label{sec3.2}
This section presents the model constraints. For this purpose, first, the constraints are discussed, and some conditions that are intended to be met in the extracted dependency graph are described, then, in Section~\ref{sec3.3}, the mathematical representation of them is introduced. Conditions consist of four different types, which are as follows: 
\begin{itemize}
\item Conditions related to the definition of dependency graphs
\item Conditions related to controlling the simplicity/ complexity of outputs
\item Conditions related to determining minimum thresholds for dependencies
\end{itemize}

In the following, conditions related to each of the aforementioned types are described.

\subsubsection{Conditions for the Definition of Dependency Graphs}\label{sec3.2.1}
Based on the definition presented in Tables~\ref{tab:1} and ~\ref{tab:2}, the following conditions are determined for the dependency graph $DG=(A_{DG},D)$ :

\textit {Condition 1.} The graph must have an initial task.

\textit {Condition 2.} The graph must have a final task.

\textit {Condition 3.} All tasks must be on a path from the initial to the final task. In other words, the following conditions must be met:

 \hangindent=1cm \textit {Condition 3.1.} There must be at least one path from the initial task to all the other tasks.

 \hangindent=1cm \textit {Condition 3.2.} There must be at least one path from all the other tasks to the final task.

Theorems \ref{thm1} and \ref{thm2} prove that condition 3.1 is met if the following condition is true:

\textit {Condition 4. (An alternative for condition 3.1):} There exists at least one directed graph $G_1=(A_{DG},D_1)$   meeting the following conditions:

 \hangindent=1cm \textit {Condition 4.1.} $D_1 \in D$.

 \hangindent=1cm \textit {Condition 4.2.} $G_1$  contains no loop.

 \hangindent=1cm \textit {Condition 4.3.} $\forall a \in A_{DG}-\{ a_s \}, \exists a_1 \in A_{DG} \mid (a_1,a) \in D_1$ .

\begin{theorem}[]\label{thm1}
 If Conditions 4.1 and 4.2 are true, and there is at least one path from $a_1 \in A_{DG} -\{a_s\}$  to  $a \in A_{DG} -\{a_s\}$; then there is a path $T_{a_1,a}$  with a finite length, which is the longest path that includes $a_1$  and ends in $a$.
\end{theorem}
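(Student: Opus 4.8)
The plan is to argue entirely inside the subgraph $G_1 = (A_{DG}, D_1)$, using only that $G_1$ is acyclic (Condition 4.2) and that the vertex set $A_{DG}$ is finite. The guiding idea is that acyclicity forbids a directed walk from ever revisiting a vertex, so every path from $a_1$ to $a$ is simple and hence has length at most $|A_{DG}| - 1$; a longest such path then exists simply because a nonempty, bounded set of integers has a maximum. Condition 4.1 is used only to guarantee that the arcs of these paths genuinely lie in $D$, so that $G_1$ is a subgraph of $DG$.

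First I would prove the key lemma that no path of $G_1$ repeats a vertex. Arguing by contradiction, suppose a path from $a_1$ to $a$ visits some vertex $v$ twice. Then the portion of the path strictly between the two occurrences of $v$, closed up by the two arcs incident with those occurrences, is a closed directed walk at $v$, i.e. a loop whose arcs all belong to $D_1$. This contradicts Condition 4.2, so every path from $a_1$ to $a$ must use pairwise distinct vertices.

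With simplicity established, the remaining steps are routine. A simple path visits each vertex of $A_{DG}$ at most once, so it contains at most $|A_{DG}|$ vertices and therefore at most $|A_{DG}| - 1$ arcs. Hence the length of any path from $a_1$ to $a$ lies in the finite set $\{1, 2, \ldots, |A_{DG}| - 1\}$. By hypothesis at least one such path exists, so the set $S$ of all such path lengths is nonempty and bounded above; letting $\ell^\star = \max S$ and choosing any path $T_{a_1,a}$ of length $\ell^\star$ yields a path that starts at $a_1$, ends at $a$, has finite length $\ell^\star \le |A_{DG}| - 1$, and is longest among all such paths, as required.

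The only delicate point is the first lemma, where ``$G_1$ contains no loop'' must be read as ``$G_1$ contains no directed cycle of any length'' (consistent with the role of the $u_i$ variables in eliminating such cycles) and applied to the closed sub-walk produced by a repeated vertex. Everything after that is the standard observation that simple paths in a finite graph have bounded length and that a finite nonempty set of integers attains its maximum, so I expect essentially no difficulty beyond getting this translation right.
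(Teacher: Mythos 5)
Your proof is correct and follows essentially the same idea as the paper's: finiteness of $A_{DG}$ together with the acyclicity of $G_1$ (Condition 4.2) forces every $a_1$-to-$a$ path to be simple, hence of bounded length, so a longest one exists and is finite. If anything, your ordering is the logically cleaner one, since the paper asserts the existence of a longest path before establishing that path lengths are bounded and only then argues its length is finite, whereas you establish the bound first and then take the maximum of a nonempty finite set of lengths.
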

\begin{proof}[Proof of Theorem~{\upshape\ref{thm1}}:]
Obviously, if there is a path from $a_1$  to $a$ , then there is a path  $T_{a_1,a}$  which is the longest path that includes $a_1$  and ends in $a$. Given the fact that $A_{DG}$  is a finite set,  $T_{a_1,a}$ can have an infinite length only if $G_1$ contains loops. This contradicts Condition 4.2.; therefore, the length of $T_{a_1,a}$  must be finite.
\end{proof}
\begin{theorem}[]\label{thm2}
 If Condition 4 is met, Condition 3.1. is also true.
\end{theorem}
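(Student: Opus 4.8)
The plan is to exploit the inclusion $D_1 \subseteq D$ granted by Condition 4.1, so that it suffices to establish reachability inside the auxiliary graph $G_1$ rather than directly in $DG$. Concretely, for every task $a \in A_{DG} - \{a_s\}$ I would exhibit a directed path from $a_s$ to $a$ all of whose arcs lie in $D_1$; because $D_1 \subseteq D$, the very same arc sequence is then a path in $DG$, which is exactly what Condition 3.1 demands. Since $a$ is arbitrary, it is enough to carry this out for a single fixed target.

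Fix $a \neq a_s$ and consider the family of all directed paths in $G_1$ that terminate at $a$. Condition 4.3 guarantees that $a$ possesses at least one incoming arc in $G_1$, so this family is nonempty. Invoking Theorem~\ref{thm1} — whose finiteness argument rests on Condition 4.2 (loop-freeness) together with the finiteness of $A_{DG}$, since a path in a loopless graph repeats no vertex and hence has length at most $|A_{DG}|-1$ — I conclude that a \emph{longest} such path $P$ exists. Write $b$ for its initial vertex.

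The heart of the argument is to show $b = a_s$, which I would do by contradiction. Suppose $b \neq a_s$; then Condition 4.3 furnishes a task $c$ with $(c,b) \in D_1$. Loop-freeness (Condition 4.2) forbids $c$ from occurring on $P$, for otherwise the portion of $P$ running from $b$ to $c$, closed by the arc $(c,b)$, would be a directed cycle in $G_1$; in particular $c \neq b$. Hence $c \to b \to \cdots \to a$ is a path ending at $a$ that is strictly longer than $P$, contradicting the maximality of $P$. Therefore $b = a_s$, and $P$ is a path from $a_s$ to $a$ lying entirely in $G_1$, and so in $DG$.

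The one genuinely delicate step is the prepending of the backward arc $(c,b)$: it must be justified that this extension does not revisit a vertex, and this is precisely where Condition 4.2 is indispensable and where the finiteness supplied by Theorem~\ref{thm1} keeps the backward tracing from running forever. Everything else is routine bookkeeping, the sole point requiring care being that the whole construction is performed within $G_1$ and exported to $DG$ only at the end through the inclusion $D_1 \subseteq D$.
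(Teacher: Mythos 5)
Your proof is correct and follows essentially the same route as the paper's: take a longest path in $G_1$ ending at $a$ (which exists by the finiteness argument of Theorem~\ref{thm1}), argue its initial vertex must be $a_s$ since otherwise Condition 4.3 would let you prepend an arc and contradict maximality, and then export the path to $DG$ via $D_1 \subseteq D$. Your explicit check that the prepended vertex $c$ cannot already lie on $P$ (using loop-freeness) is a small point the paper leaves implicit, but the argument is the same.
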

\begin{proof}[Proof of Theorem~{\upshape\ref{thm2}}:]
Obviously, if Condition 4.3. is met, for $a \in A_{DG} -\{a_s\}$ , there is at least one task $a_1 \in A_{DG}$ which is included in a path that starts from $a_1$  and ends in $a$. Therefore, according to Theorem \ref{thm1}, there is also a path $T_{a_1,a}$  with a finite length which is the longest path that includes $a_1$  and ends in $a$ . Assuming $i(T_{a_1,a})$  as the start task of  $T_{a_1,a}$, it is obvious that $i(T_{a_1,a})$  must not have any input arcs; because otherwise, $T_{a_1,a}$  cannot be the longest path that includes $a_1$  and ends in  $a$. Thus, $i(T_{a_1,a})$  cannot be any task other than $a_s$  because otherwise, this contradicts Condition 4.3. Therefore, it is proved that if Condition 4 is met, then in $G_1$ , there is at least one path from $a_s$  to every $a \in A_{DG}-\{a_s\}$. Since $D_1$  is a subset of $D$ , it can be concluded that if Condition 4 is met, then in $G$  there is at least one path from $a_s$  to every  $a \in A_{DG}-\{a_s\}$, and Condition 3.1. is met.
\end{proof}

Hence, condition 3.1 can be replaced by condition 4. In a way similar to the proof presented in Theorems \ref{thm1} and \ref{thm2}, it can also be proved that condition 3.2 will be true if the following condition is met.

\textit {Condition 5. (An alternative for condition 3.2):} There exists at least one directed graph $G_2=(A_{DG},D_2)$  meeting the following conditions:

\hangindent=1cm \textit {Condition 5.1.}  $D_2 \in D$.

 \hangindent=1cm \textit {Condition 5.2.}  $G_2$  contains no loop.

 \hangindent=1cm \textit {Condition 5.3.} $\forall a \in A_{DG}-\{ a_e \}, \exists a_1 \in A_{DG} \mid (a,a_1) \in D_2$ .
 
Therefore, if conditions 4 and 5 are met, it is guaranteed that condition 3 is also met. As a result, condition 3 is replaced by conditions 4 and 5. 

\subsubsection{Conditions Associated with the Nature of Length-Two Loops}\label{sec3.2.2}
The following conditions are proposed to control the variable set that indicates the length-two loops. Condition 6 is defined to guarantee the consistency between the variables representing the length-two loops and graph arcs. Condition 7 is derived from flexible configuration options offered by \cite {Ref7}. It reduces the extra behaviors that can be possible according to the extracted dependency graph.

\textit {Condition 6.} The variable showing length-two loop between tasks $i$ and $j$ must be equal to one when they are involved in a length-two loop; otherwise, it is definitely equal to zero. In other words:

 \hangindent=1cm \textit {Condition 6.1}. If there is an arc from $i$ to $j$ and vice versa from $j$ to $i$, the variable showing length-two loop between $i$ and $j$ ($R_{i,j}$ ) must be equal to one.

 \hangindent=1cm \textit {Condition 6.2.} If there is no arc from $i$ to $j$ or from $j$ to $i$, the variable showing length-two loop between $i$ and $j$ ($R_{i,j}$ ) must be equal to zero.

\textit {Condition 7.} The variable showing length-two loop between tasks $i$ and $j$ should be equal to zero if both  $i$ and $j$ are involved in a short loop with themselves.

\subsubsection{Conditions Associated with Controlling the Simplicity/ Complexity of Outputs}\label{sec3.2.3}
These conditions allow users to adjust the simplicity/complexity of output dependency graphs. It is supposed that the presence of many arcs in the model is an indicator of low simplicity and consequently high complexity. Therefore, to control model simplicity/complexity through the idea introduced in \cite {Ref9,Ref15}, the following conditions are proposed for the number of arcs existing in the graph:

\textit {Condition 8.} The total number of arcs in the graph must be smaller or equal to $\lvert A_L \rvert$ multiply by a user-defined threshold named $MaxArcsRatio$.

\textit {Condition 9.} The number of output arcs of each task must be smaller or equal to a user-defined threshold named $MaxOutputs$.

\textit {Condition 10.} The number of input arcs of each task must be smaller or equal to a user-defined threshold named $MaxInputs$.

\subsubsection{Conditions Associated with Determining Minimum Thresholds for Dependencies }\label{sec3.2.4}
These conditions are used to consider the user-defined minimum thresholds for dependency measures during extracting dependency graphs and are defined as follows:

\textit {Condition 11.} As far as possible, each arc in the extracted graph should have a dependency measure higher than a user-defined threshold, called $DepThresh$, unless the desired arc is in a length-two loop.

\textit {Condition 12.} As far as possible, each short loop in the extracted graph should have a short loop dependency measure higher than a user-defined threshold, called $SLoopThresh$.

\textit {Condition 13.} As far as possible, each length-two loop in the extracted graph should have a length-two loop dependency measure higher than a user-defined threshold, called $LoopThresh$.

\subsection{Model Constraints in Mathematical Terms}\label{sec3.3}
The proposed mathematical constraints related to conditions introduced in Section~\ref{sec3.2} are as follows:
\begin{align}
\sum_{\forall i \in A_L}{E_{i,j}}=0 \hspace{0.5cm} \forall j=a_{sL}
\label{eq6}
\end{align}

\begin{align}
\sum_{\forall j \in A_L}{E_{i,j}}=0 \hspace{0.5cm} \forall i=a_{eL}
\label{eq7}
\end{align}

\begin{align}
x_{i,j} \le E_{i,j}  \hspace{0.5cm} \forall i,j \in A_L
\label{eq8}
\end{align}

\begin{align}
u_i-u_j+\lvert A_L \rvert x_{i,j} \le \lvert A_L \rvert-1  \hspace{0.5cm} \forall i,j \in A_L
\label{eq9}
\end{align}

\begin{align}
\sum_{i \in A_L}{x_{i,j}}=1  \hspace{0.5cm} \forall j \in A_L: j \ne a_{sL}
\label{eq10}
\end{align}
\begin{align}
y_{i,j} \le E_{i,j}  \hspace{0.5cm} \forall i,j \in A_L
\label{eq11}
\end{align}

\begin{align}
q_i-q_j+\lvert A_L \rvert y_{i,j} \le\lvert A_L \rvert-1  \hspace{0.5cm} \forall i,j \in A_L
\label{eq12}
\end{align}

\begin{align}
\sum_{j \in A_L}{y_{i,j}}=1  \hspace{0.5cm} \forall i \in A_L: i \ne a_{eL}
\label{eq13}
\end{align}

\begin{align}
R_{i,j} \ge E_{i,j}+E_{j,i}-1   \hspace{0.5cm} \forall i,j \in A_L: i \ne j
\label{eq14}
\end{align}

\begin{align}
2R_{i,j} \le E_{i,j}+E_{j,i}   \hspace{0.5cm} \forall i,j \in A_L: i \ne j
\label{eq15}
\end{align}

\begin{align}
R_{i,j} \le 2- E_{i,i}-E_{j,j}   \hspace{0.5cm} \forall i,j \in A_L: i \ne j
\label{eq16}
\end{align}

\begin{align}
\sum _{i \in A_L}{\sum_{j \in A_L}{E_{i,j}}} \le \lvert A_L \rvert \times MaxArcsRatio
\label{eq17}
\end{align}

\begin{align}
\sum_{j \in A_L}{E_{i,j}} \le MaxOutputs \hspace{0.5cm} \forall i \in A_L
\label{eq18}
\end{align}

\begin{align}
\sum_{i \in A_L}{E_{i,j}} \le MaxInputs \hspace{0.5cm} \forall j \in A_L
\label{eq19}
\end{align}

\begin{align}
E_{i,j} \times DepThresh - R_{i,j} -forced_{i,j} \le max(0, d_{i,j})  \hspace{0.5cm}
 \forall i,j \in A_L: i \ne j
\label{eq20}
\end{align}

\begin{align}
E_{i,i} \times SLoopThresh-forcesl_i \le s_i \hspace{0.3cm} \forall i \in A_L
\label{eq21}
\end{align}

\begin{align}
R_{i,j} \times LoopThresh-forcel_{i,j} \le l_{i,j} \hspace{0.3cm} \forall i,j \in A_L
\label{eq22}
\end{align}

\begin{align}
Binary: E_{i,j},x_{i,j},y_{i,j},R_{i,j},forced_{i,j}\nonumber \\
,forcel_{i,j} \hspace{0.1cm} \forall i,j \in A_L  \nonumber \\
 Binary: forcel_i \hspace{0.5cm} \forall i \in A_L  \nonumber \\
Integer: u_i,q_i \hspace{0.5cm} \forall i,j \in A_L 
\label{eq23}
\end{align}

Constraints (\ref{eq6}–\ref{eq13}) pertain to the definition of dependency graphs. It means that constraint \ref{eq6} determines the initial task (and guarantees that condition (1) is true). This task is the input of the model and should be determined with respect to the event log. If based on the event log, it is impossible to detect a unique initial task, a synthetic initial task is added to the beginning of each trace of events. Similarly, constraint \ref{eq7} determines the final task (and guarantees that condition (2) is met).

Constraints (\ref{eq8}–\ref{eq13}) guarantee that conditions (4) and (5), which are alternatives to condition (3), will be true. In other words, they ensure that all of the tasks will be placed on a path from the initial task to the final task. Condition (4) is incorporated in constraints (\ref{eq8}–\ref{eq10}) in a way that constraint \ref{eq8} determines variables $x_{i,j}$  as a subset of direct prerequisite relations (arcs) of the dependency graph (condition (4.1)). Constraint (9) guarantees that the subset of graph arcs is free of loops (constraint (4.2)). To develop this constraint, an approach available for modeling the traveling salesman problem is utilized. This constraint allocates an integer variable $u_i$  to the task $i$. If there is a path from task $i$ to task $j$ in the graph of this subset, then  $u_i$  value must be smaller than  $u_j$ value. Accordingly, if $i$ and $j$ are placed on one loop of the graph, there is a path both from task $i$ to task $j$ and vice versa. Therefore,  $u_i$  must be smaller than  $u_j$, and at the same time,  $u_j$  must be smaller than  $u_i$. Obviously, this is impossible, meaning this constraint does not allow the graph of the subset to have any loops.
Constraint \ref{eq10} guarantees that all tasks, except for the initial task, will have an input arc in the loop-free subset created by $x_{i,j}$  (condition (4.3)). Similarly, condition (5) was incorporated in constraints (\ref{eq11}–\ref{eq13}).

In addition, constraints (\ref{eq14}–\ref{eq15}) and \ref{eq16} ensure that respectively conditions (6) and (7) are true, whereas constraints (\ref{eq17}–\ref{eq19}) guarantee that conditions (8–10) are met. Finally, constraints (\ref{eq20}-\ref{eq22}) consider meeting conditions (11-13), and constrain \ref{eq23} determines the types of variables used in the model.

\section{Measures to Evaluate Quality of Dependency Graphs}\label{sec4}
So far, various measures have been developed in assessing the quality of process models; however, there are no measures in evaluating the quality of dependency graphs to the best of our knowledge. Thus, in this section, two measures devoted to evaluating dependency graphs are introduced based on two popular measures in assessing the quality of process models, called replay fitness and precision. 

\subsection{Fitness of a Dependency Graph}\label{sec4.1}
The first proposed measure is a modified version of  $PF_{complete}$  that is introduced by \cite {Ref21}. While  $PF_{complete}$  calculates the replay fitness of a process model, the proposed measure calculates the degree of fitness of a dependency graph and the capability of the dependency graph in replaying the log traces.

Assume an event log $L$ and a dependency graph $DG$. After matching each event in $L$ to one task $a \in A_{DG}$ , the replay procedure for each trace $t \in T_L$  and each event  $e_i$ present in $t$ checks if at least an event occurred in $t$ before $e_i$  matches to one of the pre-requisite tasks of the matching task to $e_i$ . If it is true, the occurrence of $e_i$  is compatible with the pre-requisite relations in $DG$. Else, we identify its occurrence as “executed without pre-requisite requirements” and assign a penalty to it. $AEWPr(L,DG)$  represents the number of all events that, according to $DG$ are executed without pre-requisite requirements.

In the same way, for each trace $t \in T_L$  and each event $e_i$  present in $t$, the replay procedure also checks the fulfillment of the post-requisite requirements in $D$G. If the execution of $e_i$   is not compatible with the post-requisite relations of $DG$, we identify its occurrence as “executed without post-requisite requirements” and assign a penalty to it. $AEWPo(L,DG)$  represents the number of all events that according to $DG$ are executed without post-requisite requirements.

If an event is compatible with both pre-requisite and post- requisite relations of $DG$, it is identified as being compatible with $DG$; this means that $DG$ can replay it without any problem. If all events in a trace are compatible with $DG$, then $DG$ can replay the trace without any issue. In this case, the trace is identified as being compatible with $DG$.

Suppose $AFE(L,DG)$   as all events in $L$ compatible with $DG$. Also suppose $NTEWPr(L,DG)$  and $NTEWPo(L,DG)$  as the number of traces in $L$ that contains at least one event non-compatible with respectively pre-requisite and post- requisite relations of $DG$, $NEL(L)$  as the number of all events present in $L$, and $NTL(L)$  as all traces present in $L$. Our proposed measure of the fitness of $L$ regarding $DG$, called $FiM$, can be obtained as follows:

\begin{align}
FiM(L,DG)=\frac{AFE(L,DG)-penalty(L,DG)}{NEL(L)} 
\label{eq24}
\end{align}

Where:

\begin{align}
penalty(L,DG)=\frac{AEWPr(L,DG)}{NTL(L)-NTEWPr(L,DG)+1} 
+\frac{AEWPo(L,DG)}{NTL(L)-NTEWPo(L,DG)+1} 
\label{eq25}
\end{align}

The pseudo-code of the proposed procedure for replaying $L$  on $DG$  and achieving $FiM$  measure is explained in Algorithm \ref{algo1}. Assuming an event log $L$  and a dependency graph $DG$ , the following notations are used in  Algorithm  \ref{algo1}:
\begin{itemize}
\item $\varepsilon$: The set of all events that are present in  $L$.
\item $A_L$: The set of tasks that are present in $L$.
\item $A_{DG}$: The set of tasks that are present in $DG$.
\item $T_L$: The set of traces that are present in $L$.
\item $E:t \in T_L \mapsto E_t \subseteq \varepsilon$: The set of events that are present in  $t$.
\item $\delta :a \in A_L \mapsto A_{DG}$: The member of $A_{DG}$  that corresponds to $a \in A_L$.
\item $Act :e_i \in \varepsilon \mapsto a \in A_L$: The member of $A_L$  that corresponds to event $e_i \in \varepsilon$.
\item $inp : a \in A_{DG} \mapsto i_a  \subseteq A_{DG}$: The set of tasks that according to $DG$  are pre-requisite of  $a \in A_{DG}$.
\item $out : a \in A_{DG} \mapsto o_a  \subseteq A_{DG}$: The set of tasks that according to $DG$  are post-requisite of  $a \in A_{DG}$.
\item $pre: e_i \in E(t) \mapsto a_{pr}  \subseteq A_L$: The sequence of tasks which are mapped to the sequence of all events present in $t$  that occurred before $e_i$.
\item $suc: e_i \in E(t) \mapsto a_{po}  \subseteq A_L$: The sequence of tasks which are mapped to the sequence of all events present in $t$  that occurred after $e_i$.
\end{itemize}

\begin{algorithm}
\caption{Pseudo-code of replaying an event log $L$  on a dependency graph $DG$, and obtaining values required for calculation of $FiM$ measure}\label{algo1}
\KwIn {Event log $L$}
\KwOut {$FiM$}

 $AEWPr \gets 0$\;
$AEWPo \gets 0$\;
 $AFE \gets 0$\;
 $NTEWPr \gets 0$\;
 $NTEWPo \gets 0$\;
 $NEL \gets 0$\;
 $NTL \gets 0$\;
\For{\texttt{each $t \in T_L$}} {
	 $NTL \gets NTL+1$\;
	 $EWPr \gets 0$\;
	 $EWP0 \gets 0$\;
	\For{\texttt{each event $e_i \in E(t)$}}{
		 $NEL \gets NEL+1$\;
		\If{$(\nexists e \in E(t) \mid Act(e) \in pre(e_i) \wedge \delta (Act(e)) \in inp(\delta(Act(e_i))) ) \wedge not(inp(\delta(Act(e_i)))=\emptyset)$}{
           		  $AEWPr \gets  AEWPr + 1$    \;    
			 $EWPr \gets 1$   \; 
        		}
		\If{$(\nexists e \in E(t) \mid Act(e) \in suc(e_i) \wedge \delta (Act(e)) \in out(\delta(Act(e_i))) ) \wedge not(out(\delta(Act(e_i)))=\emptyset)$}{
           		  $AEWPo \gets AEWPo + 1$     \;   
			 $EWPo \gets 1$    \;
        		}
		\If{$EWPr+EWPo=0$}{
           		  $AFE \gets AFE + 1$        \;
        		}
	}
	\If{$EWPr=1$}{
           	 $NTEWPr \gets NTEWPr + 1$     \;   
        	}
	\If{$EWPo=1$}{
           	 $NTEWPo \gets NTEWPo + 1$      \;  
        	}
}
 $penalty \gets \frac{AEWPr}{NTL-NTEWPr+1}+\frac{AEWPo}{NTL-NTEWPo+1}$\;
 $FiM \gets \frac{AFE-penalty}{NEL}$\;
\Return{$FiM$}\;
\end{algorithm}

\subsection{Precision of a Dependency Graph}\label{sec4.2}

Our second proposed measure is named $PrM$. It is a modified version of the “Advanced behavioral appropriateness” measure introduced by \cite {Ref22}, which evaluates the precision of a process model. The higher the “Advanced behavioral appropriateness” of the model, the less the allowance to the behaviors not present in the event log. In the same way, $PrM$ takes higher values when the dependency graph contains fewer dependencies/long-distance dependencies not present in the event log. 

Assume an event log $L$ and a dependency graph $DG$. To calculate $PrM$ we analyze and compare the behaviors that are potentially possible in $DG$ with the behaviors that are actually observed in $L$. Suppose $F^L(x,y)$  and $F^{DG}(x,y)$  as follows:

\begin{itemize}
\item $F^L(x,y)$: Assuming tasks $x,y \in A_L$, $F^L(x,y)$  equals one if there exists at least one trace $t \in T_L$ , in which event $e_1$  is executed, and then event $e_2$  is eventually executed; whereas, $e_1$  and $e_2$  are mapped onto $x$ and $y$ respectively. Otherwise, $F^L(x,y)$  equals zero.
\item  $F^{DG}(x,y)$: Assuming tasks $x,y \in A_L$, $F^{DG}(x,y)$  equals one if there exists at least one path from task $x$ to task $y$ in $DG$. Otherwise,  $F^{DG}(x,y)$   equals zero.
\end{itemize}

Let $A=A \cup A_{DG}$ , then $PrM$ can be achieved as follows:

\begin{align}
PrM(L,DG)=\frac{\sum_{x,y \in A}{(F^L(x,y) \times F^{DG}(x,y))}}{ F^{DG}(x,y)}
\label{eq26}
\end{align}

Where $F^{DG}(x,y)$ can be calculated by Warshall’s algorithm \cite {Ref23}.  The pseudo-code for calculating $F^L(x,y)$ is presented in Algorithm \ref{algo2}.

\begin{algorithm}
\caption{Pseudo-code for calculation of $F^L(x,y)$  (required for achieving $PrM$ measure)}\label{algo2}
\KwIn {Event log $L$}
\KwOut  {$F^L(x,y)$ } 

 $F^L(x,y) \gets 0   \hspace{0.5cm}  // \forall x,y \in A_L$\;
\For{\texttt{each $t \in T_L$}}{
	\For{\texttt{each event $e_i \in E(t)$}}{
		 $F^L(Act(e_i),a) \gets 1  \hspace{0.5cm} // \forall a \in a_L \mid a \in suc(e_i)$\;
	}
}

\Return{$F^L(x,y)$}\;
\end{algorithm}

\section{Experimental Results}\label{sec5}

This section evaluates the proposed ILP model and presents numerical results. The results of the proposed method were compared to the results of the dependency graph discovery step of the most prominent heuristic mining methods, i.e., Heuristics Miner (HM)\cite {Ref3}, Flexible Heuristics Miner (FHM)\cite {Ref6}, Fodina\cite {Ref7}. The dependency graph discovery step of Proximity Miner (PM) \cite {Ref10} was also included in our experiments because the outputs of this method can be used for applying the second step of heuristic mining methods. To apply HM and FHM, “Mine for a Heuristics Net Using Heuristics Miner” and “Mine for a Causal Net Using Heuristics Miner” plugins for ProM 6.4 were used, respectively. “Mine Causal Net with Fodina” plugin for ProM6.6 was also used to apply Fodina. To apply PM, the ILP model for dependency graph discovery proposed by the introduced study was coded in GAMS software. 

The dependency graphs extracted by HM, FHM, Fodina, and the proposed method depend on the methods’ parameters. However, since each method has many parameters, comparing all methods with different combinations of all of their parameters was extremely time-consuming and exhausting; therefore, we decided to make comparisons by using different values of only the most important parameter of each method as it was observed that the other parameters have a much smaller impact. Thus, in our evaluations, we considered 11 different values for the most important parameter of each method, as is presented in Table ~\ref{tab:3}. Parameters disregarded in the table were determined by the software default. Note that most existing studies that have used heuristic-based process discovery methods in their evaluations utilized only one set of parameters (one output) for each method (for example: \cite {Ref7,Ref19,Ref24, Ref25}).

In the case of PM, because no domain knowledge is employed in the evaluations, there are no parameters in the method that can be changed. Hence for each event log, Proximity Miner can achieve only one dependency graph, and as a result, it is excluded in Table~\ref{tab:3}.

\begin{table}
\begin{center}
\caption{Configurations applied to each method (Proximity Miner is excluded since it has no adjustable parameter)}
\label{tab:3}       
\begin{adjustbox}{width=0.9\textwidth}
\begin{tabular}{lccccccccccc}
\hline\noalign{\smallskip}
	&	\multicolumn{11}{c}{Configurations}\\
	\noalign{\smallskip}\cline{2-12}\noalign{\smallskip}
 & C1	&	C2	&	C3	&	C4	&	C5	&	C6	&	C7	&	C8	&	C9	&	C10	&	C11  \\
\noalign{\smallskip}\cline{2-12}\noalign{\smallskip}
\underline {HM}\\
Dependency	&		80	&	82	&	84	&	86	&	88	&	90	&	92	&	94	&	96	&	98	&	100\\
\\
\underline {FHM}\\
Dependency	&		80	&	82	&	84	&	86	&	88	&	90	&	92	&	94	&	96	&	98	&	100\\
\\
\underline {Fodina}\\
Dependency threshold	&		80	&	82	&	84	&	86	&	88	&	90	&	92	&	94	&	96	&	98	&	100\\
\\
\underline {Proposed Method}\\												
MaxArcsRatio		&	2.1	&	2	&	1.9	&	1.8	&	1.7	&	1.6	&	1.5	&	1.4	&	1.3	&	1.2	&	1.1\\
DepThresh	&		0	&	0	&	0	&	0	&	0	&	0	&	0	&	0	&	0	&	0	&	0\\
SLoopThresh	&		0	&	0	&	0	&	0	&	0	&	0	&	0	&	0	&	0	&	0	&	0\\
LoopThresh	&		0	&	0	&	0	&	0	&	0	&	0	&	0	&	0	&	0	&	0	&	0\\
MaxOutputs		&		1000	&	1000	&	1000	&	1000	&	1000	&	1000	&	1000	&	1000	&	1000	&	1000	&	1000\\
MaxInputs		&		1000	&	1000	&	1000	&	1000	&	1000	&	1000	&	1000	&	1000	&	1000	&	1000	&	1000\\

\noalign{\smallskip}\hline
\end{tabular}
\end{adjustbox}
\end{center}
\end{table}

To compare the quality of the obtained dependency graphs, we examined the fitness, precision, and simplicity of the graphs in this section. Fitness and precision have been discussed in Section~\ref{sec4}. Simplicity is also one of the main dimensions of assessing process model quality \cite {Ref8}. According to the simplicity dimension, process models should be as simple as possible, and the simplest process model that can properly describe the event log behaviors is desired. According to  \cite {Ref24}, the size of process models is a measure of simplicity. Increasing the size of dependency graphs usually makes the dependency graphs visually less simple and raises the size of process models. Thus, considering the size of dependency graphs as a measure of dependency graph simplicity, the simpler dependency graphs are desired; because they usually result in simpler process models, and a non-simple dependency graph cannot lead to a simple process model. 

The size of dependency graphs is described through the number of graph nodes and arcs. For an event log $L$, the node set of the dependency graphs extracted by all methods used in the evaluations is equal to $A_L$. Hence, for an event log $L$, the node numbers of results of all methods are equal. In consequence, only the arc number of dependency graphs, called $A_N$, is considered as the measure of dependency graph simplicity in the evaluations of this paper. the higher the $AN$, the lower the dependency graph simplicity.

 The $FiM$ and $PrM$ measures introduced in the previous section were used to evaluate the fitness and precision of the dependency graphs. However, these two measures are complementary, and a graph with high fitness but low precision, or vice versa, is not considered as a high-quality graph. To consider the fitness and precision measures simultaneously, according to \cite {Ref8}, we used F-score as the harmonic mean of fitness ($FiM$) and precision ($PrM$). The $F-score$ measure takes a low value when one of $FiM$ and $PrM$ is low, even when the other measure is high.

For our evaluations, we used all publicly available event logs used by the review and benchmark paper \cite {Ref26} (consists of 12 event logs), as well as, Production \cite {Ref27} and Receipt phase of an environmental permit application process (Receipt) \cite {Ref28} event logs. These event logs cover the various domains of healthcare, finance, production, government services, and IT management. 10 out of 14 event logs consist of BPI Challenge (BPIC) logs, while all event logs are real and publicly available at 4TU Centre for Research Data\footnote{https://data.4tu.nl/}. More details on the specifications of the event logs are given in Table~\ref{tab:4}. It should be mentioned that wherever an event log had not a unique initial/final task, the artificial initial/final tasks were added to it by the “Add Artificial Events” plugin for ProM 6.6.

\begin{table}
\begin{center}

\caption{Specifications of the event logs used in the assessments}
\label{tab:4}       
\begin{adjustbox}{width=0.5\textwidth}
\begin{tabular}{llll}
\hline\noalign{\smallskip}
Log Name & Number of &  $Number of $ &  $Number of $  \\
 	&  Activities &  $ Traces$ &  $Events$  \\
\noalign{\smallskip}\hline\noalign{\smallskip}
$BPIC12$ &	36&	13087& 	262200\\
$BPIC13_{cp}$		&	7&	1487&	6660\\
$BPIC13_{inc}$	&	13&	7554& 	65533\\
$BPIC14_f$	&	9&	41353& 	369485\\
$BPIC15_{2f}$	&	82&	681&	24678\\
$BPIC15_{3f}$	&	62&	1369&	43786\\
$BPIC15_{4f}$	&	65&	860&	29403\\
$BPIC15_{5f}$	&	74&	975&	30030\\
$BPIC17_f$	&	41&	21861& 	714198\\
$RTFMP$	&	11&	150370&	561470\\
$Sepsis$	&	16&	1050&	 15214\\
$receipt$	&	27&	1434& 	8577\\
$Production$	&	55&	225&	4543\\
\noalign{\smallskip}\hline
\end{tabular}
\end{adjustbox}
\end{center}
\end{table}

\subsection{Assessments}\label{sec5.1}

To make our comparisons, for each event log and each method except for PM, the configurations C1 to C11 were utilized according to Table~\ref{tab:3}. As a result, for each event log, 11 dependency graphs were extracted by each method. Since for PM, no adjustable parameter was available; only one dependency graph was extracted by PM for each event log. Hence, we extracted 45 dependency graphs for each event log (For $BPIC15_{3f}$ and $BIC17_f$ event logs we extracted 34 dependency graphs because the plugin used for applying FHM failed to generate a model from these event logs). In consequence, we extracted and evaluated a total of 608dependency graphs for all event logs in our assessments. It should be mentioned that the dependency graphs in which at least one task was not on a path from the initial task to the final task were excluded from the experiments.

In the first experiment, the complexity rates of dependency graphs developed by each method were analyzed to compare the lowest $AN$ (complexity) that each method can achieve. Since achieving a dependency graph with low $AN$ and also a low $F-score$ is worthless, we defined four thresholds for minimum acceptable $F-score$ measures (i.e., 0.6, 0.7, 0.8, 0.9) that dependency graphs can attain. Then for each threshold, after excluding the dependency graphs with $F-score$ measures lower than the threshold, we compared the lowest $AN$ that each method could achieve. The results can be observed in Table~\ref{tab:5}. We used “–“ wherever each method did not achieve a dependency graph with an $F-score$ higher than the specified threshold.

\begin{table}
\caption{The minimum $AN$ that each method achieved for different $F-score$ minimum acceptable thresholds}
\label{tab:5}       
\begin{adjustbox}{width=1\textwidth}
\begin{tabular}{lcccccccccccccc}
\hline\noalign{\smallskip}
	&	\multicolumn{11}{c}{Event Logs}\\
\noalign{\smallskip}\cline{2-15}\noalign{\smallskip}
	&	\rotatebox{90}{$BPIC12$} &	\rotatebox{90}{$BPIC13_{cp}$} &	\rotatebox{90}{$BPIC13_{inc}$}  &	\rotatebox{90}{$BPIC14_f$} &	\rotatebox{90}{$BPIC15_{1f}$} &	\rotatebox{90}{$BPIC15_{2f}$} &	\rotatebox{90}{$BPIC15_{3f}$} &	\rotatebox{90}{$BPIC15_{4f}$} &	\rotatebox{90}{$BPIC15_{5f}$} &	\rotatebox{90}{$BPIC17_f$} &	\rotatebox{90}{$RTFMP$} &	\rotatebox{90}{$Sepsis$} &	\rotatebox{90}{$Receipt$}&  \rotatebox{90}{$Production$}	  \\		
\noalign{\smallskip}\cline{2-15}\noalign{\smallskip}
\underline {$F-score$ minimum threshold=0.60}\\
HM	&		58 &	16 &	33 &	18 &	87 &	100 &	76 &	79 &	88 &	63 &	16 &	31 &	42 &	-\\
FHM	&		112 &	- &	- &	22 &	123 &	183 &	- &	147 &	152 &	- &	25 &	40 &	47 &	-\\
Fodina	&	60 &	17 &	44 &	23 &	94 &	110 &	88 &	84 &	95 &	88 &	19 &	36 &	44 &	140\\
PM	&		201 &	32 &	98 &	37 &	124 &	197 &	189 &	151 &	154 &	117 &	77 &	135 &	113 &	-\\
The Proposed Method	&		{\bf 51}	&	{\bf 8}	&	{\bf 21}	&	{\bf 11}	&	{\bf 84}	&	{\bf 98}	&	{\bf 74}	&	{\bf 78}	&	{\bf 81}	&	{\bf 46}	&	{\bf 13}	&	{\bf 27}	&	{\bf 33} &	{\bf 102}\\
\\
\underline {$F-score$ minimum threshold=0.70}\\													
HM	&		58 &	16 &	33 &	18 &	87 &	147 &	{\bf 76} &	138 &	88 &	63 &	16 &	31 &	42 &	-\\
FHM	&		112 &	- &	- &	22 &	123 &	183 &	- &	147 &	152 &	- &	25 &	40 &	47 &	-\\
Fodina	&	60 &	17 &	44 &	23 &	94 &	110 &	88 &	138 &	95 &	88 &	19 &	36 &	44 &	-\\
PM	&		201 &	32 &	98 &	37 &	124 &	197 &	-&	151 &	154 &	117 &	77 &	135 &	- &	-\\
The Proposed Method	&		{\bf 51}	&	{\bf 8}	&	{\bf 21}	&	{\bf 11}	&	{\bf 84}	&	{\bf 98}	&	80	&	{\bf 84}	&	{\bf 88}	&	{\bf 50}	&	{\bf 13}	&	{\bf 27}	&	{\bf 33} & -\\
\\														
\underline {$F-score$ minimum threshold=0.80}\\														
HM	&		58 &	16 &	33 &	18 &	87 &	147 &	141 &	146 &	88 &	63 &	16 &	31 &	44 & -\\
FHM	&		112 &	- &	- &	22 &	123 &	183 &	- &	147 &	- &	- &	25 &	40 &	47 &	-\\
Fodina	&	60 &	17 &	44 &	23 &	94 &	110 &	88 &	146 &	95 &	88 &	19 &	36 &	44 &	-\\
PM	&		201 &	32 &	98 &	37 &	124 &	197 &	-&	151 &	-&	117 &	77 &	135 &	- &	-\\
The Proposed Method	&		{\bf 51}	&	{\bf 8}	&	{\bf 22}	&	{\bf 12}	&	{\bf 84}	&	{\bf 98}	&	{\bf 80}	&	{\bf 91}	&	{\bf 88}	&	{\bf 50}	&	{\bf 13}	&	{\bf 28}	&	{\bf 39} & -\\
\\														
\underline {$F-score$ minimum threshold=0.90}\\														
HM	&		{\bf 58} &	16 &	- &	18 &	- &	-&	- &	- &	- &	63 &	16 &	31 &	- & -\\
FHM	&		- &	- &	- &	22 &	- &	- &	- &	- &	- &	- &	25 &	40 &	- &	-\\
Fodina	&	60 &	17 &	{\bf 44} &	23 &	- &	- &	- &	- &	- &	88 &	26 &	36 &	- &	-\\
PM	&		- &	32 &	98 &	37 &	- &	- &	-&	- &	-&	117 &- &	- &	- &	-\\
The Proposed Method	&		59	&	{\bf 12}	&	-	&	{\bf 13}	&	-	&	-	&	-	&	-	&	-	&	{\bf 50}	&	{\bf 14}	&	{\bf 30}	&	-  & -\\

\noalign{\smallskip}\hline
\end{tabular}
\end{adjustbox}
\end{table}

According to Table~\ref{tab:5}, in the cases of the minimum acceptable $F-score$ thresholds of 0.6 and 0.8, the simplest dependency graph for all event logs was achieved by the proposed method. In the case of the threshold of 0.7, only for one event log (BPIC153f), the simplest dependency graph was not extracted by the proposed method; however, for this event log, the proposed method could obtain an $AN$ level only 5\% higher than the simplest result. In the case of the minimum acceptable $F-score$ threshold of 0.9, for seven event logs, at least a method could extract a dependency graph with an $F-score$ higher than the acceptable threshold. While, for five event logs out of the 7 logs, the proposed method was successful in extracting the simplest dependency graph. Whereas, for the BPIC12 event log, the proposed method could obtain a dependency graph with only one arc more than the simplest graph. Thus, the proposed method was significantly superior to the other methods in this regard.

Our second experiment included analyzing the cases in which each method extracted a dependency graph that contained at least one task that was not on a path from the initial task to the final task. The results are shown in Table~\ref{tab:6}. It can be seen that this is not a rare problem in FHM and Fodina methods. In addition, despite HM and PM methods did not encounter this issue, theoretically, they are also capable of encountering it, and only the proposed method can mathematically guarantee the non-occurrence of the problem. 

\begin{table}
\begin{center}
\caption{Percentage of the cases each method extracted a dependency graph in which at least one activity was not on a path from the initial task to the final task }
\label{tab:6}       
\begin{tabular}{p{0.30\linewidth}  p{0.05\linewidth}	p{0.05\linewidth}	p{0.08\linewidth}	p{0.05\linewidth}	p{0.08\linewidth}}
\hline\noalign{\smallskip}
	& HM	&	FHM	&	Fodina	& PM&	Proposed Method\\
\noalign{\smallskip}\hline\noalign{\smallskip}
Percentage of the case that each method extracted a dependency graph in which at least an activity that the final activity is not reachable from it, or it is not reachable from the initial activity		&		0\%	&	25\%	&	9\%	&	0\%  & 0\% \\

\noalign{\smallskip}\hline
\end{tabular}
\end{center}
\end{table}

following the superiority of the method in previous experiments, the $FiM$ and $PrM$ that the dependency graphs extracted by each method could obtain were compared in the next experiment. Here again, in order to consider only one measure in assessing the graph qualities, the $F-score$ measure was utilized. For each event log, after assessing the dependency graphs extracted by each method, we considered the dependency graph with the highest $F-score$ as the best result of the method. Then we compared the $AN$, $FiM$, $PrM$, and $F-score$ of the best results obtained by each method (called $AN_b$, $FiM_b$, $PrM_b$, and $F-score_b$, respectively) in Table~\ref{tab:7}. For each event log, we chose the result with the highest $F-score$ as the highest quality result. Wherever there was more than one result with the highest $F-score$, we chose the candidate result with the lowest $AN$ as the highest quality result. For each event log, the highest quality result is boldfaced in Table~\ref{tab:7}. In addition, we used “–“ wherever a method did not extract any dependency graph with all tasks on a path from the initial task to the final task.

\begin{table}
\caption{Quality assessment results of the best dependency graphs extracted by different methods}
\label{tab:7} 

\begin{adjustbox}{width=1\textwidth}
\begin{tabular*}{1.5\textheight}{@{\extracolsep{\fill}}lcccccccccccccccc@{\extracolsep{\fill}}}
\hline\noalign{\smallskip}

	&	&	\multicolumn{2}{c}{HM}&	&	\multicolumn{2}{c}{FHM}&	&	\multicolumn{2}{c}{Fodina}&	&	\multicolumn{2}{c}{PM}&	 &	\multicolumn{2}{c}{Proposed Method}\\
\noalign{\smallskip}\cline{3-4}\cline{6-7}\cline{9-10}\cline{12-13}\cline{15-16}\noalign{\smallskip}
Log Name	&	&	$AN_b$	& $F-score_b$		&	&	$AN_b$	& $F-score_b$	&	&	$AN_b$	& $F-score_b $	&	&	$AN_b$	& $F-score_b)$ &	&	$AN_b$	& $F-score_b $\\
Log Name	&	&		& $ (FiM_b, PrM_b)$		&	&		& $ (FiM_b, PrM_b)$	&	&		& $ (FiM_b, PrM_b)$	&	&		& $ (FiM_b, PrM_b)$  &	&		& $ (FiM_b, PrM_b)$\\
\noalign{\smallskip}\hline\noalign{\smallskip}

$BPIC12$ & &	58 &0.93(0.90, 0.97)& &	112&		0.84(0.98, 0.73)& &	60	&	0.92(0.93, 0.90)& &	201	&	0.83(1.00, 0.72)& &	{\bf 66}	&	{\bf 0.94(0.94, 0.93)}\\
$BPIC13_{cp}$	& &	{\bf 17} &	{\bf 0.95(0.99, 0.92)}	& &	-&	-&&		18&	0.94(0.99, 0.90)& &		32&	0.93(0.99, 0.88)& &		14&	0.92(0.92, 0.91)
\\
$BPIC13_{inc}$		& &	33&	0.89(0.95, 0.84)	&&	-&	-	&&	{\bf 44}&	{\bf 0.92(1.00, 0.85)}&&		98&	0.92(1.00, 0.85)&&		24&	0.84(0.80, 0.88)
\\
$BPIC14_f$		& &	22&	0.99(0.99, 1.00)	&&	22&	0.99(0.99, 1.00)&&		{\bf 33}&	{\bf 1.00(1.00, 1.00)}	&&	37&	1.00(1.00, 1.00)	&&	15&	0.99(0.98, 1.00)
 \\
$BPIC15_{1f}$		& &	113&	0.89 (0.98, 0.82)&&		123&	0.88(1.00, 0.79)&&		113&	0.89(0.98, 0.82)&&		124&	0.88(1.00, 0.79)&&		{\bf 98}&	{\bf 0.89 (0.95, 0.84)} \\
$BPIC15_{2f}$			& &	147&	0.86(0.95, 0.80)	&&	183&	0.80(0.99,0.67)	&&	147&	0.86(0.95, 0.80)&&		197&	0.80(1.00, 0.67)	&&	{\bf 139}&	{\bf 0.89(0.93, 0.86)} \\
$BPIC15_{3f}$			& &	141&	0.81(0.97, 0.70)	&&	-&	-&&		88&	0.85(0.82, 0.88)	&&	189&	0.69(1.00, 0.53)&&		{\bf 105}	& {\bf 0.89(0.90, 0.88)} \\
$BPIC15_{4f}$		& &	138&	0.80(0.99, 0.67)	&&	147&	0.80(1.00, 0.67)	&&	138&	0.80(0.99, 0.67)&&		151&	0.80(1.00, 0.67)	&&	 {\bf 97}&	 {\bf 0.86(0.90, 0.82)}
 \\
$BPIC15_{5f}$		& &	88&	0.84(0.80, 0.88)&&		152&	0.79(1.00, 0.65)&&		95&	0.85(0.84, 0.87)	&&	154&	0.79(1.00, 0.65)	&&	 {\bf 118}&	 {\bf 0.89(0.93, 0.85)}
\\
$BPIC17_f$		& &	86&	0.96(0.97, 0.95)	&&	-&	-	&&	{\bf  88}&	{\bf  0.97(0.99, 0.95)}	&&	117&	0.96(1.00, 0.92)&&		67	&0.96(0.96,0.95)
\\
$RTFMP$		& &	{\bf 24}&	{\bf 0.99(0.97, 1.00)}	&&	25&	0.99(0.97, 1.00)	&&	26&	0.92(0.99, 0.85)	&&	77&	0.88(1.00,0.79)	&&	{\bf 24}	&{\bf 0.99(0.97, 1.00)}
\\
$Sepsis$		& &	35&	0.99(0.98, 1.00)	&&	{\bf  42}&	{\bf  1.00(0.99, 1.00)}	&&	49&	1.00(0.99, 1.00)&&		135&	0.81(1.00, 0.68)	&&	37&	0.97(0.94, 1.00)
\\
$receipt$			& &	44&	0.81(0.99, 0.69)	&&	{\bf  50}&	{\bf  0.82(1.00, 0.69)}	&&	44&	0.81(0.99, 0.69)	&&	113&	0.67(1.00, 0.51)	&&	39&	0.81(0.98, 0.70)
 \\
$Production$			& &	134&	0.54(0.84, 0.39)&&		-	&	-&&	140&	0.60(0.91, 0.44)	&&	433&	0.48(1.00, 0.31)	&&	{\bf 102}&	{\bf 0.62(0.72, 0.54)}
\\
\noalign{\smallskip}\hline

\end{tabular*}
\end{adjustbox}

\end{table}

According to Table~\ref{tab:7}, the proposed method obtained the highest quality result for 8 out of 14 event logs. While the highest quality result was obtained by HM, FHM, Fodina, and PM methods, respectively for 2, 2, 3, and 0 event logs. On the other hand, for $BIC14_f$, $BIC17_f$, and $Receipt$ event logs, the $F-score$ of the best result achieved by the proposed method is only 0.01 lower than the $F-score$ of the highest quality result, whereas it enjoys far less $AN$ (graph complexity). Especially for $BIC14_f$, the $AN$ of the best result achieved by the proposed method is about 55\% lower than $AN$ of the highest quality result. In addition, for $BPIC13_{cp}$ and $Sepsis$ event logs, the $F-score$ of the best result obtained by the proposed method was only 0.03 lower than the $F-score$ of the highest quality result, and here again, the best result of the proposed method has less $AN$ than the highest quality result. Hence, in 13 out of 14 event logs, the best result of the proposed method was the highest quality result; or its $F-score$ was very close to the $F-score$ of the highest quality result while it was significantly simpler than the highest quality result. Only for the $BPIC13_{inc}$ event log, the best result of the proposed method was not the highest quality result nor comparable to the highest quality result. Therefore, the proposed method outperformed the other methods in this experiment too, and in most cases, the best result (in terms of all three measures of fitness, precision, and simplicity) achieved for each event log was obtained by the proposed method. For instance, for $BPIC15_f$ event log, the best result of each method can be seen in Fig. ~\ref {fig:1}.

\begin{figure*}
\begin{center}
  \includegraphics [width=1\textwidth]{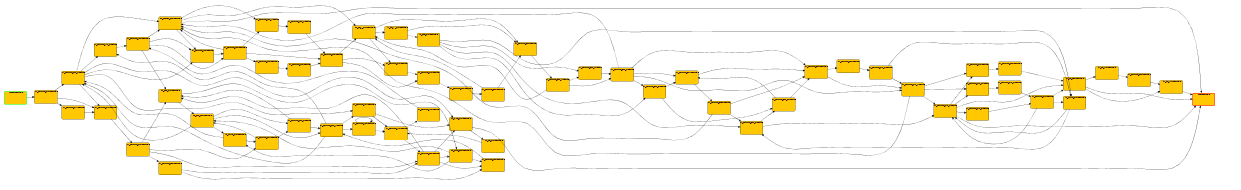}
  \\
  \text{(a)}\\
  \includegraphics [width=1\textwidth]{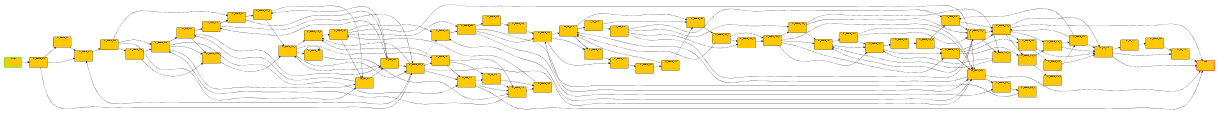}
    \\
  \text{(b)}\\
  \includegraphics [width=1\textwidth]{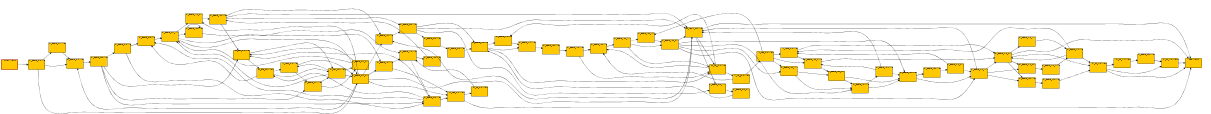}
      \\
  \text{(c)}\\
    \includegraphics [width=1\textwidth]{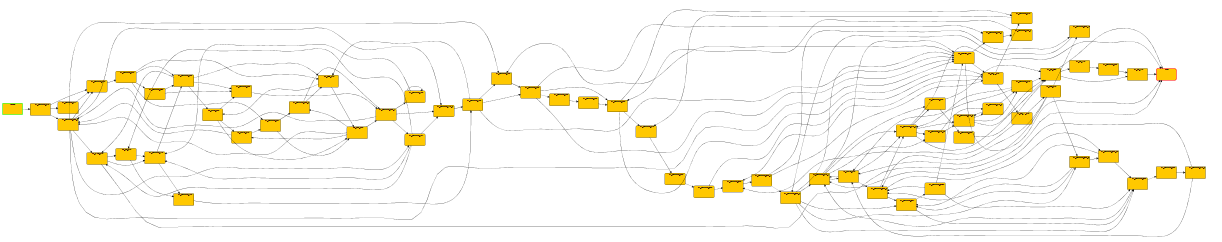}
      \\
  \text{(d)}\\
   \includegraphics [width=1\textwidth]{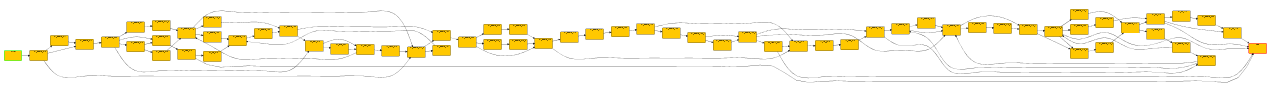}
      \\
\text{(e)}\\
\caption{Best output achieved by (a) HM, (b)FHM, (c) Fodina, (d) PM, and (e) Proposed method for $BPIC15_{4f}$ event log}
\label{fig:1}       
\end{center}
\end{figure*}

In general, the proposed method indicated superiority in the performed experiments. However, using ILP may raise a concern about the solution time of the problem. For event log $L$, the proposed ILP model is composed of  $6\lvert A_L \rvert^2+3\lvert A_L \rvert$  variables and $9\lvert A_L \rvert^2+\lvert A_L \rvert+1$  constraints. Thus, when $\lvert A_L \rvert$  is a large number, in the worst case, the problem-solving will take a very long time. However, usually, process mining problems do not have a high $\lvert A_L \rvert$. According to \cite {Ref14}, most existing process mining researches consider problems with $\lvert A_L \rvert$  levels less than 30-40. The study \cite {Ref14} that is dedicated to discovering the process model of large and complex event logs used event logs with $\lvert A_L \rvert$  levels between 30-130. On the other hand, usually, ILP models are anticipated to be solved in a time very shorter than the worst case. Therefore, the proposed model is expected to be solved at a short or acceptable time for a wide range of levels that process mining problems usually have. To assess this anticipation, experiments on real-world and synthetic event logs were performed. All experiments were done on a PC with Intel(R) Core(TM) i5 CPU @2.40GHz and 8GB RAM.

First, the time spent by the proposed method on discovering the dependency graph of each real event log introduced in Table~\ref{tab:4} was analyzed. The information about the time spent on solving the ILP model for each event log is presented in Table~\ref{tab:8}. According to the results for all cases, the proposed ILP model was successful in solving the model in less than 74 seconds which is considered a relatively high and acceptable speed. Furthermore, it should be noted that for all event logs, the median of time spent on problem-solving was much lower than 74 seconds. For 5 event logs, the median of the spent time was less than 1 second, and for 9 event logs, it was less than 4 seconds.

\begin{table*}
\caption{Time spent on problem-solving by the proposed method (seconds)}
\label{tab:8}       
\begin{adjustbox}{width=1\textwidth}
\begin{tabular}{lcccccccccccccc}
\hline\noalign{\smallskip}
	&	\multicolumn{11}{c}{Event Logs}\\
\noalign{\smallskip}\cline{2-15}\noalign{\smallskip}
	&	\rotatebox{90}{$BPIC12$} &	\rotatebox{90}{$BPIC13_{cp}$} &	\rotatebox{90}{$BPIC13_{inc}$}  &	\rotatebox{90}{$BPIC14_4$} &	\rotatebox{90}{$BPIC15_{1f}$} &	\rotatebox{90}{$BPIC15_{2f}$} &	\rotatebox{90}{$BPIC15_{3f}$} &	\rotatebox{90}{$BPIC15_{4f}$} &	\rotatebox{90}{$BPIC15_{5f}$} &	\rotatebox{90}{$BPIC17_f$} &	\rotatebox{90}{$RTFMP$} &	\rotatebox{90}{$Sepsis$} &	\rotatebox{90}{$Receipt$} &	\rotatebox{90}{$Production$}	  \\		
\noalign{\smallskip}\cline{2-15}\noalign{\smallskip}
Median	&	2.36	&	0.09	&	0.15	&	0.09	&	12.49	&	11.76	&	3.37	&	6.47	&	14.54	&	1.38	&	0.13	&	0.19	&	1.26& 10.24\\
Standard Deviation	&	1.87	&	0.03	&	0.03	&	0.04	&	10.88	&	6.59	&	2.21	&	4.68	&	19.57	&	0.31	&	0.04	&	0.02	&	0.98& 6.97\\
Maximum	&	5.06	&	0.16	&	0.19	&	0.17	&	33.28	&	23.22	&	9.59	&	14.94	&	73.58	&	2.17	&	0.19	&	0.23	&	2.88& 26.5\\
\noalign{\smallskip}\hline
\end{tabular}
\end{adjustbox}
\end{table*}

In the next experiment, the time spent on problem-solving for some artificial event logs with different$\lvert A_L \rvert$  and noise levels was analyzed. Using artificial event logs allows us to test the model on logs of different predetermined $\lvert A_L \rvert$  and noise levels. In this experiment, we used $\lvert A_L \rvert$  levels ranging from 10 to 160 which cover a wide range of $\lvert A_L \rvert$  levels that process mining problems usually have. For each $\lvert A_L \rvert$  level, using a ProM 6.6 plugin named “Generate Block-Structured Stochastic Petri Net” 5 different random process models were created. Then, for each of the process models, the Gena \cite {Ref29} plugin for ProM 6.6 was used to create the artificial event log having different noise levels (5\%, 10\%, 15\%, and 20\%). All event logs were created with 10000 traces. 

\begin{figure}
 \begin{center}
  \includegraphics [width=0.5\textwidth]{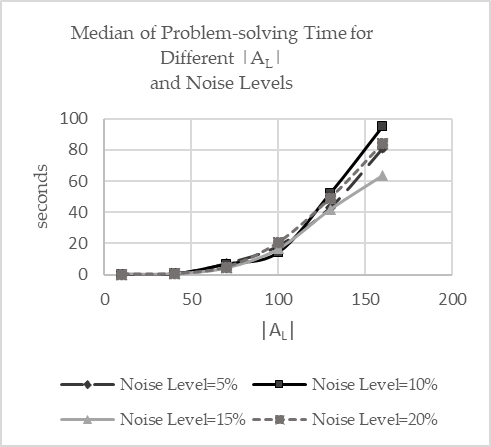}
 \end{center}
\caption{Median of problem-solving time for different $\lvert A_L \rvert$   and noise levels}
\label{fig:2}       
\end{figure}

Using all configurations mentioned in Table ~\ref{tab:3}, the proposed ILP model was applied to each obtained synthetic event log. For each $\lvert A_L \rvert$   and each noise level, the median of achieved problem-solving time is depicted in Fig. ~\ref{fig:2}. According to the results, the noise level had not an important effect on problem-solving time, whereas, $\lvert A_L \rvert$   was more effective. Nevertheless, even in high levels of  $\lvert A_L \rvert$ , (i. e. 130 and 160) the median of time spent on problem-solving was under 100 seconds which is considered as an acceptable time for solving process mining problems. In low levels of  $\lvert A_L \rvert$ , the problem was solved in a very short time. Table~\ref{tab:9} presents the highest time spent on problem-solving for different $\lvert A_L \rvert$   levels. It can be seen that only for $\lvert A_L \rvert$  =160 the maximum spent time was higher than 1000 seconds. In the other levels, the maximum spent time was under 251 seconds which is an acceptable problem-solving time.

\begin{table}
\begin{center}
\caption{Time spent on problem-solving by the proposed method (seconds)}
\label{tab:9}       
\begin{adjustbox}{width=0.6\textwidth}
\begin{tabular}{lcccccccccccccc}
\hline\noalign{\smallskip}
	&	\multicolumn{6}{c}{Event Logs}\\
\noalign{\smallskip}\cline{2-7}\noalign{\smallskip}
		& 10	&	40	&	70	& 100&	130& 160	  \\		
\noalign{\smallskip}\cline{2-7}\noalign{\smallskip}
Maximum\\
 problem-solving\\
 time (seconds)		&		0.98	&	44.34	&	17.52	&	250.21  &119.42& $>$1000 \\

\noalign{\smallskip}\hline
\end{tabular}
\end{adjustbox}
\end{center}
\end{table}
 According to the experiments performed on the real-life and synthetic event logs with a wide range of $\lvert A_L \rvert$  levels (10-160), the median of the solution time was under 100 seconds for high $\lvert A_L \rvert$  levels (i.e., 130 and 160), and under 20 seconds for the other levels. This is supposed as a short (or at least acceptable) time for solving process mining problems. Therefore, for most process mining problems, it is anticipated that the proposed method can be applied without any serious concerns about solution time. 

\section{conclusion and Future Works}\label{sec6}
Dependency graph discovery is an essential step in heuristic-based process discovery, which is a widely used and popular approach in this area. The current heuristics-based methods select the initial set of dependency graph arcs concerning some thresholds for minimum dependency measures and then modify this set. This can result in choosing a non-optimal set of arcs. Also, the modifications can lead to modeling infrequent behaviors (i.e., less precision) and increasing the dependency graph size (i.e., less simplicity). Furthermore, when the dependency graphs extracted by the existing methods contain loops, some tasks are capable of not being on a path from the initial task to the final task. Therefore, this paper introduces a new mathematical programming model, which selects the optimal dependency graph arcs according to the dependency measures of arcs/loops. Simultaneously, it eliminates the flaws mentioned above of the existing methods by using appropriate constraints. It also offers some flexibility in the procedure of extracting the dependency graphs, i.e.; it can restrict the minimum dependency measure of the arcs/loops present in the graph. It also contains options in dealing with short and length two loops. Nevertheless, in this approach, employing many other types of domain knowledge and user-desired flexibility is simply available by introducing relevant constraints. This can be an advantage with high applicability in real-world problems.

The proposed method was assessed using real publicly available event logs. Accordingly, the proposed method outperformed the prominent methods of dependency graph discovery in attaining simple, high-quality dependency graphs. Also, in most cases, among the various dependency graphs extracted by applying different methods to each event log, the highest quality output (considering all three dimensions of fitness, precision, and simplicity) was obtained using the proposed method. It also indicated superiority over the other methods since it is mathematically guaranteed that, even when the dependency graphs extracted by the proposed method contain loops, all graph tasks are on a path from the initial task to the final task. Furthermore, according to the experiments, the proposed method is expected to be solved at a short or at least acceptable time for most sizes of process mining problems.

In future work, more discussions focusing only on the flexible discovery of dependency graphs and the related constraints can be presented. Another possible future research topic can be defining a multi-objective mathematical model in which each objective function corresponds to a quality measure of process models.




\end{document}